\documentclass{article}
\usepackage{graphicx}

%
\PassOptionsToPackage{numbers, compress}{natbib}
\usepackage[final]{nips_2016}


\usepackage[utf8]{inputenc} 
\usepackage[T1]{fontenc}    
\usepackage{hyperref}       
\usepackage{url}            
\usepackage{booktabs}       
\usepackage{amsfonts}       
\usepackage{nicefrac}       
\usepackage{microtype}      
\usepackage{amsthm}

\newtheorem{theorem}{Theorem}[section]

\newtheorem{lemma}[theorem]{Lemma}

\newcommand{\cN}{\mathcal{N}}
\newcommand{\cG}{\mathcal{G}}
\newcommand{\cO}{\mathcal{O}}

\title{Sparse Diffusion-Convolutional Neural Networks}

\author{
  James Atwood\thanks{Now at Google Brain.} \\
  UMass Amherst CICS \\
  Amherst, MA \\
  \texttt{jatwood@cs.umass.edu} 
  \And
  Siddharth Pal \\
  Raytheon BBN Technologies \\
  Cambridge, MA \\
  \texttt{siddharth.pal@raytheon.com} \\
  \And
  Don Towsley \\
  UMass Amherst CICS \\
  Amherst, MA \\
  \texttt{towsley@cs.umass.edu} 
  \And
  Ananthram Swami \\
  U.S. Army Research Lab \\
  Adelphi, MD \\
  \texttt{ananthram.swami.civ@mail.mil}
}

\begin{document}

\maketitle

\let\thefootnote\relax\footnotetext{J. Atwood and S. Pal contributed equally to this work.}

\let\thefootnote\relax\footnotetext{This work was supported in part by Army Research Office Contract W911NF-12-1-0385 and ARL Cooperative Agreement  W911NF-09-2-0053 (the ARL Network Science CTA). This document does not contain technology or technical data controlled under either the U.S. International Traffic in Arms Regulations or the U.S. Export Administration Regulations.}

\begin{abstract}
The predictive power and overall computational efficiency of Diffusion-convolutional neural networks make them an attractive choice for node classification tasks.  However, a naive dense-tensor-based implementation of DCNNs leads to $\mathcal{O}(N^2)$ memory complexity which is prohibitive for large graphs.  In this paper, we introduce a simple method for thresholding input graphs that provably reduces memory requirements of DCNNs to $\mathcal{O}(N)$ (i.e. linear in the number of nodes in the input)  without significantly affecting predictive performance.
\end{abstract}

\section{Introduction}
There has been much recent interest in adapting models and techniques from deep learning to the domain of graph-structured data~\cite{NIPS_atwood,Bruna_2013,NIPS_defferrard,Henaff_2015,ICLR_kipf,Niepert_2016}. Proposed by Atwood and Towsley~\cite{NIPS_atwood}, Diffusion-convolutional neural networks (DCNNs) approach the
problem by learning \lq filters'  that summarize local information in a graph via a diffusion process.  These filters have been observed to provide an effective basis for node classification.

The DCNNs have been shown to possess attractive qualities like obtaining a latent representation for graphical data that is invariant under isomorphism, and utilizing tensor operations that can be efficiently implemented on the GPU. Nevertheless, as was remarked in \cite{NIPS_atwood}, when implemented using dense tensor operations, DCNNs have a $\mathcal{O}(N^2)$ memory complexity, which could get prohibitively large for massive graphs with millions or billions of nodes.

In an effort to improve the memory complexity of the DCNN technique, we investigate two approaches of thresholding the diffusion process --  a pre-thresholding technique that thresholds the transition matrix itself, and a post-thresholding technique that enforces sparsity on the power series on the transition matrix. We show that pre-thresholding the transition matrix provides provably linear ($\mathcal{O}(N)$) memory requirements while the model's predictive performance remains unhampered for small to moderate thresholding values ($\rho \leq 0.1$). On the other hand, the post-thresholding technique did not offer any gains in memory complexity. This result suggests that pre-thresholded sparse DCNNs (sDCCNs) are suitable models for large graphical datasets.

\section{Model}

We study node classification on a single graph, say $\cG = (V, E)$, with $V$ being the vertex or node set, and $E$ being the set of edges.
No constraints are imposed on the graph $\cG$; the graph can be weighted or unweighted, directed or undirected. Each vertex is assumed to be associated with $F$ features, leading to the graph being described by an $N \times F$ design matrix, $X$, and an $N \times N$ adjacency matrix, with $N=|V|$ being the number of vertices. In DCNN, we compute a degree-normalized transition matrix $P$ that gives the probability of moving from one node to another in a single step. However, in a sparse implementation of DCNN (sDCNN), rather than using the transition matrix directly, we remove edges with probabilities below a threshold in order to both improve memory complexity and regularize the graph structure.

Assume the nodes are associated with labels, i.e., each node $i$ in $V$ has a label $y_i$ in $Y$. Given a set of labelled nodes in a graph, the node classification task is to find labels for unlabeled nodes. Note that, while in this work we focus on node classification tasks, this framework can be easily extended to graph classification tasks where graphs have labels associated with them rather than individual nodes~\cite{NIPS_atwood}.

Next, we describe the DCNN framework in greater detail. The neural network takes the graph $\cG$ and the design matrix $X$ as input, and returns a hard prediction for $Y$ or a conditional distribution $\mathbb{P}(Y|X)$ for unlabelled nodes. Each node is transformed to a diffusion-convolutional representation, which is an $H \times F$ real matrix defined by $H$ hops of graph diffusion over $F$ features. The core operation of a DCNN is a mapping from nodes and their features to the results of a diffusion process that begins at that node. The node class label is finally obtained by integrating the result of the diffusion process over the graph through a fully connected layer, thus combining the structural and feature information in the graph data. In sDCNN, the diffusion process itself is thresholded, to reduce the computational complexity of the diffusion process over a large graph. 

\begin{figure}
    \centering
    \includegraphics[scale=0.5]{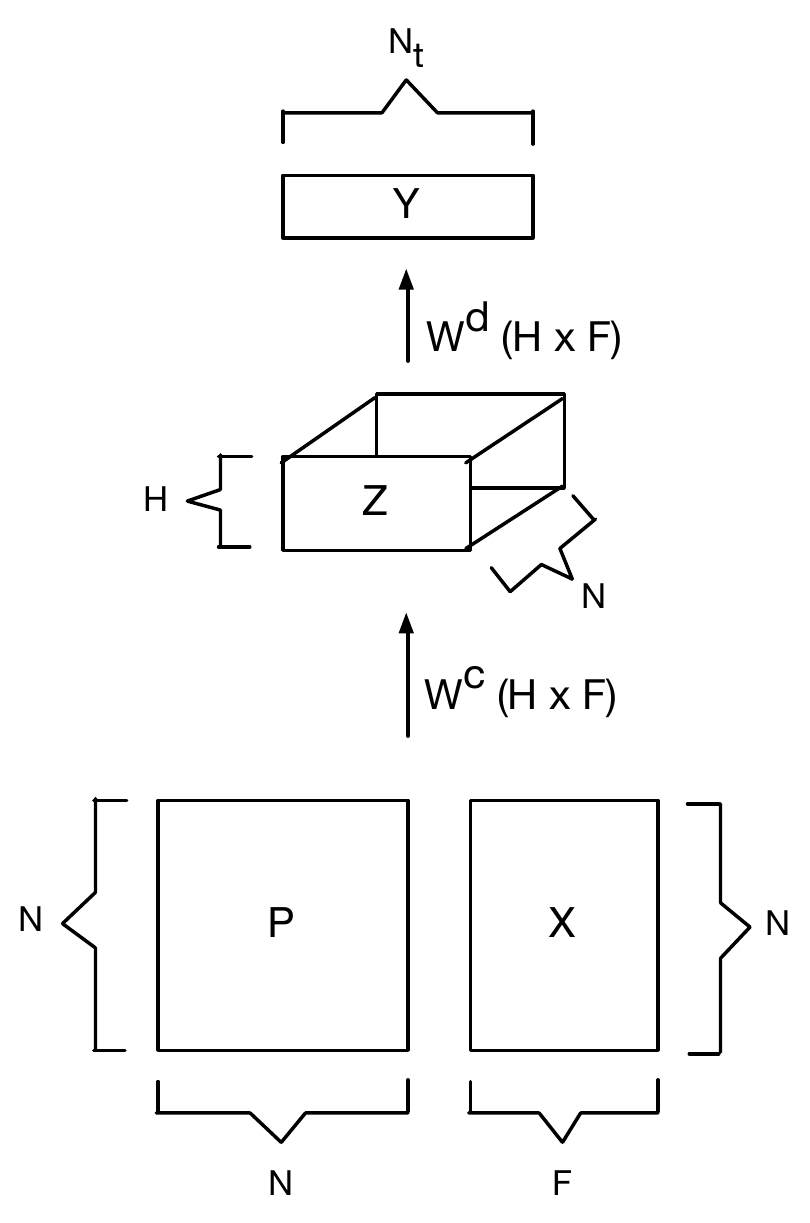}
    \caption{DCNN schematic diagram for node classification tasks}
    \label{fig:DCNN_model}
\end{figure}

\textbf{DCNN with no thresholding} Consider a node classification task 
where a label $Y$ is associated with each node in a graph. Let $P ^*$ be 
an $N \times H \times N$ tensor containing the power series of the transition matrix $P$. The probability of reaching node $l$ from node $i$ through $j$ hops is captured by $P^j_{il}$, or equivalently 
by $P ^* _{ijl}$. The diffusion-convolutional activation $Z_{ijk}$ for node $i$, hop $j$ and feature $k$ of graph $\cG$ is given by 
\begin{equation}
    Z_{ijk} = f 
\left(
W_{jk} ^c \cdot \sum _{l =1} ^N P_{ijl} ^* X_{lk}
\right)
\end{equation}
where $\{ W ^c_{jk}, j=1,2,\ldots, H, \ k=1,2,\ldots,F\}$ are the learned weights of the diffusion-convolutional layer, and $f$ is the activation function. Briefly, the weights determine the effect of neighboring nodes' features on the class label of a particular node. The activations can be expressed more concisely using tensor notation as 
\begin{equation}
    Z = f(W^c \odot P^* X),
\end{equation}
where the $\odot$ operator represents element-wise multiplication. Observe that the model only entails $\cO(H \times F)$ parameters, making the size of the latent diffusion-convolutional representation independent of the size of the input. 

The output from the diffusion-convolutional layer connects to the output layer with $N_t$ neurons through a fully connected layer. A hard prediction for $Y$, denoted $\hat{Y}$, can be obtained by taking the maximum activation, as follows
\begin{equation}
    \hat{Y} = \arg \max \left( f \left( W ^d \odot Z \right) \right),
\end{equation}
whereas, a conditional probability distribution $\mathbb{P}(Y | X)$ can be obtained by applying the softmax function
\begin{equation}
    \mathbb{P} (Y | X) = \mbox{softmax} \left( f \left( W ^d \odot Z \right) \right). 
\end{equation}

Since DCNNs require computation and storage of tensors representing the power series of the transition matrices, it is costly in terms of computational resources. In this work, we investigate methods to enforce sparsity in such tensors, and consequently reduce the utilization of memory. In what follows, we describe two thresholding methods for enforcing sparsity. 

\textbf{Pre-thresholding} Through this technique the transition matrix is first thresholded, and then a power series of the thresholded transition matrix is computed. 
For a threshold value $\sigma$, the pre-thresholded activation $Z_{ijk}^{pre}$ for 
node $i$, hop $j$, and feature $k$ is given by 
\[
Z_{ijk} ^{pre} = f 
\left(
W_{jk} ^c \cdot \sum _{l =1} ^N \bar{P}_{ijl} ^* X_{lk}
\right)
\]
where 
\[
\bar{P}_{il} = 1[P_{il}\geq \sigma] P_{il}
\]
is the thresholded transition matrix, and 
\[
\bar{P}_{ijl} ^*=\bar{P}^j_{il}.
\]
Note that for an event $\mathcal{E}$, $1[\mathcal{E}]=1$, if the event is true, and $0$ otherwise.

\textbf{Post-thresholding} This thresholding method enforces sparsity on the power series of the transition matrix $P$.
For a threshold value $\rho$, the post-thresholded activation $Z_{ijk}^{post}$ for 
node $i$, hop $j$, and feature $k$ is given by
\[
Z_{ijk} ^{post} = f 
\left(
W_{jk} ^c \cdot \sum _{l =1} ^N 1[P_{ijl} ^* \geq \rho] P_{ijl} ^* X_{lk}
\right).
\]

Qualitatively, pre-thresholding only considers strong ties within a particular node's neighborhood with all the intermediary ties being sufficiently strong, whereas, post-thresholding looks at the entire neighborhood of a node, and chooses the strong ties, allowing long hop ties to be chosen, potentially passing through multiple weak ties.
For threshold parameter $\rho$ or $\sigma$ set to zero, all the ties are considered, and we obtain the DCNN setting in this limit. 
On the other hand, when the threshold parameter is set to the maximum value of one, only the relevant node's feature value is considered along with its neighboring node only if the node in question has one neighbor. This is qualitatively close to a logistic regression setting, even though not exactly the same.

\section{Complexity results}

\begin{lemma}
For $H>1$, the memory complexity for the DCNN method is $\cO(N^2+NFH)$.
For $H=1$, the memory complexity is $\cO(|E|+NF)$.
\end{lemma}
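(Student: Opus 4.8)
The plan is to carefully account for each object the DCNN must store and add up the dominant terms, separating the general case $H>1$ from the degenerate case $H=1$. First I would enumerate the ingredients: the design matrix $X$ of size $N \times F$, the weights $W^c$ of size $H \times F$ (negligible), the diffusion tensor $P^*$ of size $N \times H \times N$, and the activation tensor $Z$ of size $N \times H \times F$. The fully-connected layer contributes $\mathcal{O}(N_t \cdot NHF)$-type terms, but since $N_t$ (the number of classes) is treated as a constant, this folds into $NFH$. So the raw accounting gives $\mathcal{O}(NF + HF + N^2 H + NHF)$, and the task is to justify simplifying this to the claimed expression.

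The key observation for the $H>1$ case is that $P^*$ is a dense $N \times H \times N$ tensor in the unthresholded DCNN — every entry $P^*_{ijl} = P^j_{il}$ is generically nonzero once $j \geq 1$ and the graph is connected — so storing it costs $\Theta(N^2 H)$. I would then argue that for the purpose of the memory bound we may absorb the factor $H$ in the $N^2$ term appropriately: actually I should be careful here. The cleanest route is to note that the paper's stated bound $\mathcal{O}(N^2 + NFH)$ must be read as $\mathcal{O}(N^2 H + NFH)$ collapsed under the convention that $H$ is a small constant (the number of hops), so $N^2 H = \mathcal{O}(N^2)$. Under that convention the power-series tensor contributes $\mathcal{O}(N^2)$, the activations and design matrix contribute $\mathcal{O}(NFH + NF) = \mathcal{O}(NFH)$, and the weights are lower-order; summing gives $\mathcal{O}(N^2 + NFH)$. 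I would state this constant-$H$ convention explicitly at the start of the proof so the arithmetic is unambiguous.

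For the $H=1$ case the diffusion tensor degenerates: $P^*$ consists only of $P^0 = I$ (trivially storable, or not stored at all) and $P^1 = P$ itself. The crucial point is that $P$ is the degree-normalized adjacency matrix, so its sparsity pattern is exactly that of the graph: it has exactly $\mathcal{O}(|E|)$ nonzero entries (counting each edge once or twice depending on directedness, which only affects constants). Stored in a sparse format, $P$ therefore costs $\mathcal{O}(|E|)$ rather than $\mathcal{O}(N^2)$. The activation tensor $Z$ is now $N \times 1 \times F$, i.e. $\mathcal{O}(NF)$, and $X$ is $\mathcal{O}(NF)$; the weights are $\mathcal{O}(F)$. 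Summing, the dominant terms are $\mathcal{O}(|E| + NF)$, which is the claim. (One should also note $|E| \geq$ something like $N-1$ for a connected graph is not needed — the bound stands as written.)

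The main obstacle I anticipate is purely expository rather than mathematical: making precise the role of $H$ and why the general-case bound is written as $N^2$ rather than $N^2 H$. The honest statement is that the $H>1$ bound holds under the standard assumption that $H = \mathcal{O}(1)$ (which is how DCNNs are used in practice, with $H$ a handful of hops), and I would flag that assumption rather than silently dropping the factor. Everything else is a routine tally of tensor dimensions plus the single genuine insight that $P$ inherits the graph's sparsity when no power series is formed, so that the $H=1$ case escapes the quadratic blowup entirely.
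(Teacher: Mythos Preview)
Your proposal reaches the right bounds, but the accounting for the $H>1$ case differs from the paper's in a way that is worth noting. You store the full power-series tensor $P^*$ of size $N \times H \times N$, obtain $\mathcal{O}(N^2 H)$, and then invoke $H = \mathcal{O}(1)$ to collapse this to $\mathcal{O}(N^2)$. The paper instead observes that one need not materialize $P^*$ at all: it suffices to store the $H$ products $P^j X$ (each $N \times F$, contributing the $NFH$ term) and, as a working buffer, a \emph{single} intermediate power $P^j$ at a time (since $P^{j+1}$ can be formed from $P^j$ and $P$ and then overwrite it), contributing the $N^2$ term directly. Thus the paper's $N^2$ is genuinely $N^2$, not $N^2 H$ with the $H$ suppressed, and no constant-$H$ assumption is needed for that part of the bound. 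Your concern that the $N^2$ versus $N^2 H$ issue was ``purely expository'' is therefore slightly off: the paper resolves it by a sharper memory strategy rather than by convention. For the $H=1$ case your argument matches the paper's exactly --- $P$ inherits the adjacency sparsity and costs $\mathcal{O}(|E|)$.
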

\begin{proof}
An efficient way to store the power series would be to store the product of the power of transition matrices with the design matrix. However the intermediate powers of the 
transition matrices need to be stored, which requires $\cO(N^2)$ memory. Storing the 
product of the power series tensor with the design matrix requires $\cO(NFH)$ memory. 
Therefore, the upper bound on the memory usage is $\cO(N^2+NFH)$. 

However if $H=1$, the transition matrix can be represented in $\cO(|E|)$, thereby getting rid of the $\cO(N^2)$ memory requirement.
\end{proof}

\begin{lemma}
For $H>1$ and a fixed threshold $\sigma$, memory complexity under the pre-thresholding technique is 
$\cO \left( \min \left( N\cdot  \frac{1}{\sigma ^{H}} , N^2 \right) + NFH \right)$. For $H=1$, the memory complexity is 
$\cO \left( \min \left( N\cdot  \frac{1}{\sigma ^{H}} , |E|, N^2 \right) + NFH \right)$
\label{lemma:Pre_thresholding_memory}
\end{lemma}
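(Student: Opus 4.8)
The plan is to reuse the proof structure of the preceding (unthresholded) lemma verbatim for the $\cO(NFH)$ part, and to replace the $\cO(N^2)$ term by a sparsity count for the powers of the thresholded matrix $\bar P$. First I would record two elementary facts. Since $P$ is row-stochastic and thresholding only sets some entries to zero, $\bar P$ is \emph{substochastic}: all entries are nonnegative and every row sums to at most $1$; hence every power $\bar P^{j}$ is substochastic as well. Second, every nonzero entry of $\bar P$ is at least $\sigma$ by construction, and because $\bar P^{j}_{il}$ is a sum of nonnegative products of $j$ entries of $\bar P$, whenever $\bar P^{j}_{il}>0$ at least one such product is positive, hence $\bar P^{j}_{il}\geq \sigma^{j}$. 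So every nonzero entry of $\bar P^{j}$ is bounded below by $\sigma^{j}$.

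Next I would combine these two facts into the row-sparsity bound. Fix $j\le H$. Row $i$ of $\bar P^{j}$ has total mass at most $1$ and each of its nonzero entries is at least $\sigma^{j}$, so that row has at most $\sigma^{-j}$ nonzero entries; summing over the $N$ rows, $\bar P^{j}$ has at most $N\sigma^{-j}$ nonzeros. For $\sigma\le 1$ we have $\sigma^{-j}\le\sigma^{-H}$ for all $j\le H$, so each power is storable in $\cO\!\left(\min\!\left(N\sigma^{-H},N^{2}\right)\right)$ space; the trivial $N^2$ cap handles the regime where $\sigma$ is so small that $N\sigma^{-H}>N^{2}$. When $j=1$ one also has the bound $|E|$, since $\bar P$ inherits (a subset of) the support of the adjacency matrix.

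Then I would assemble the memory figure exactly as in the unthresholded case: accumulate the product $\bar P^{*}X$ hop by hop in $\cO(NFH)$ space, while at each hop holding only the current power $\bar P^{j}$ in sparse form and forming $\bar P^{j+1}=\bar P^{j}\bar P$ with sparse arithmetic. Peak memory is the accumulator plus one sparse power, i.e. $\cO\!\left(\min\!\left(N\sigma^{-H},N^{2}\right)+NFH\right)$ for $H>1$. For $H=1$ only $\bar P$ itself is ever materialized, whose support size is at most $\min\!\left(N\sigma^{-1},|E|,N^{2}\right)$, giving $\cO\!\left(\min\!\left(N\sigma^{-H},|E|,N^{2}\right)+NFH\right)$.

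I expect the only genuinely substantive step to be the lower bound $\bar P^{j}_{il}\geq\sigma^{j}$ on surviving entries: this is precisely where nonnegativity of the transition matrix matters (there is no cancellation among the $j$-step products), and it is what upgrades the soft "row mass $\le 1$" constraint into the hard cap of $\sigma^{-j}$ nonzeros per row that drives the whole bound. The remaining steps are identical bookkeeping to the non-sparse argument and should require no new ideas.
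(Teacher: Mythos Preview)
Your proposal is correct, but it reaches the row-sparsity bound by a different route than the paper. The paper argues by induction on the hop count: it defines $\cN_i^h=\{l:\bar P^h_{il}>0\}$, establishes $|\cN_i^1|\le\sigma^{-1}$ from the row-sum constraint on $\bar P$, and then observes combinatorially that $\cN_i^{h+1}=\{m:j\in\cN_i^h,\ m\in\cN_j^1\}$, whence $|\cN_i^{h+1}|\le|\cN_i^h|\cdot\sigma^{-1}\le\sigma^{-(h+1)}$. Your argument is instead fully analytic and non-inductive: you note that every nonzero entry of $\bar P^j$ is at least $\sigma^j$ (because it contains at least one positive $j$-fold product of entries each $\ge\sigma$, and nonnegativity rules out cancellation), and then combine this entrywise lower bound with the substochasticity of $\bar P^j$ to cap each row at $\sigma^{-j}$ nonzeros. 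Your approach is arguably more direct and makes the role of nonnegativity explicit at the key step; the paper's approach needs the value-based bound only at $j=1$ and is thereafter pure support-counting. Both yield the identical estimate, and the $\cO(NFH)$ bookkeeping and the $H=1$ refinement via $|E|$ match the paper.
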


\begin{proof}
We argue that a sparse representation of the power series tensor
product with the design matrix occupies 
$\cO \left( \min \left( N\cdot \frac{1}{\sigma ^{H}}, N^2 
\right) + NFH \right)$ memory in an inductive manner.
For node $i$ in $V$ and hop $h$, we define the set of nodes in the $h$-hop neighborhood that influence node $i$ through pre-thresholding as
\[
\cN _i ^h = \{ l \ | \ \bar{P} ^{h}_{il} > 0 \}.
\]

For $j=0$, $\bar{P} ^j$ is the identity matrix $I_n$, which is sparse with exactly $N$ non-zero entries.

For $j=1$, $\bar{P} ^j$ is the thresholded transition matrix. In the pre-thresholding operation, transition probabilities from node $i$ that are less than $\sigma$ are set to zero.
For a particular node $i$, the set of nodes that
have 1-hop transition probabilities greater than or equal to $\sigma$, 
is exactly $\cN ^1_i$.
Since, $\sum _{l \in V} P_{il} = 1$ and $\sum _{l \in V} \bar{P}_{il} \leq 1$, we must have 
$|\cN ^1 _i| \leq  \min \left(\frac{1}{\sigma}, N \right)$. Therefore, $\bar{P}$ can have at most 
$\min \left( \frac{N}{\sigma}, N^2 \right)$ entries.

For $j=h$, $\bar{P} ^h$ is the thresholded transition matrix raised to the $h^{th}$ power. Suppose the sparse representation is such that 
$\{ \bar{P} _{ihl}, l \in V \}$ has only 
$ \min \left( \frac{1}{\sigma^h} , N \right) $ non-zero entries for each $i$ in $V$, implying that $\bar{P} ^h$ have $\cO( \min \left( N \cdot \frac{1}{\sigma^h}, N^2 \right) )$ entries\\
The final step is to prove that for $j=h+1$, $\bar{P} ^{j}$ has  
$\cO \left( \min \left( N \cdot \frac{1}{\sigma^{h+1}}, N^2 \right) \right)$ entries. For $i$ in V, let
\[
\cN _i ^{h+1} = \{ l \ | \ \bar{P} ^{h+1}_{il} > 0 \}.
\]
By assumption, $|\cN _i ^h| \leq \min \left( \frac{1}{\sigma ^h}, N \right)$. Observe that 
$\cN _i^{h+1}$ is the set $\{ m \ | \ j \in \cN _i ^{h}, m \in \cN _j ^1 \}$. 
Since $| \cN _i ^h | \leq \min \left( \frac{1}{\sigma ^h} , N \right) $ and $|\cN_j ^1| \leq \min \left( \frac{1}{\sigma} , N \right)$
for all $j$ that are in $\cN_i ^h$, we have the bound $| \cN _i ^{h+1} | \leq \min \left( \frac{1}{\sigma ^{h+1}}, N \right) $. Thus, we have proved that 
$\bar{P} ^{h+1}$ has  $\cO \left( \min \left( N \cdot \frac{1}{\sigma^{h+1}}, N^2 \right) \right)$ non-zero entries. 
Thus by induction, the costliest operation is computing $\bar{P} ^H$ which requires $\cO \left( \min \left( N \cdot \frac{1}{\sigma ^{H}}, N^2 \right) \right)$ memory, and the result follows. 
\end{proof}

\begin{lemma}
For $H>1$ and a fixed threshold $\rho$, memory complexity under the post-thresholding technique is 
$\cO \left( N^2 + NFH \right)$. For $H=1$, the memory complexity is $\cO(|E|+NF)$.
\end{lemma}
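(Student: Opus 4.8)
The plan is to follow the template of Lemma~1 almost verbatim, the conceptual content being that \emph{post}-thresholding acts on the true power series and so, unlike pre-thresholding, cannot be used to keep the intermediate computations sparse. For the case $H>1$ I would first note that evaluating $Z^{post}$ requires, for every hop $j\le H$, the quantities $1[P^*_{ijl}\ge\rho]\,P^*_{ijl}=1[P^j_{il}\ge\rho]\,P^j_{il}$, i.e.\ the entries of the \emph{exact} $j$-th power $P^j$. The natural way to obtain $P^j$ is the recursion $P^j=P^{j-1}P$, and the intermediate factor $P^{j-1}$ is in general a dense $N\times N$ matrix --- for example, once $j-1$ exceeds the diameter of a connected aperiodic graph every entry of $P^{j-1}$ is strictly positive --- so it occupies $\cO(N^2)$ memory. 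The threshold $\rho$ is applied only \emph{after} $P^j$ has been formed and therefore never shrinks the object carried into the next multiplication, so it gives no saving on the $\cO(N^2)$ term. Adding the $\cO(NFH)$ needed to store the (thresholded) products $\bar P^{*}X$ over all $H$ hops and $F$ features yields $\cO(N^2+NFH)$, which is exactly the vanilla bound of Lemma~1; that coincidence is the formal sense in which post-thresholding buys nothing.

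For $H=1$ the only powers involved are $P^0=I_N$ and $P^1=P$. The transition matrix $P$ has exactly the support of the adjacency matrix and so is stored in $\cO(|E|)$, and its thresholded version $1[P_{il}\ge\rho]\,P_{il}$ has no more nonzero entries than $P$; the design matrix and the single product $\bar P^{*}X$ each fit in $\cO(NF)$. Summing gives $\cO(|E|+NF)$, as claimed and as in the unthresholded model.

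The one genuinely delicate point --- and the step I expect to be the main obstacle --- is justifying rigorously that one cannot be clever and avoid ever materialising a dense intermediate power, i.e.\ pinning down the contrast with Lemma~\ref{lemma:Pre_thresholding_memory}. In pre-thresholding the recursion runs on the already-sparse matrix $\bar P$, so the neighbourhoods $\cN_i^h$ stay small and sparsity compounds favourably; in post-thresholding the recursion must run on the true $P$, because the mask $1[P^j_{il}\ge\rho]$ depends on the \emph{unthresholded} power, and a tiny entry $P^h_{il}$ can still contribute a large entry $P^{h+1}_{il'}$, so no entry may be pruned before all $H$ hops are complete. Turning that informal observation into something airtight --- for instance by exhibiting an explicit family of graphs on which every $P^j$ with $2\le j\le H$ has $\Theta(N^2)$ nonzero entries, so that the $\cO(N^2)$ term is not merely an upper bound but unavoidable --- is where most of the care is needed; the remainder is bookkeeping identical to Lemma~1.
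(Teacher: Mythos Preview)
Your proposal is correct and follows the same line as the paper's own proof: the key observation in both is that the threshold is applied only after the true powers $P^j$ are formed, so the intermediate dense $N\times N$ matrices must still be stored, giving the $\cO(N^2)$ term, with the $\cO(NFH)$ and the $H=1$ case handled exactly as in Lemma~1. Your treatment is in fact considerably more detailed than the paper's (which dispatches the lemma in three sentences and does not attempt the lower-bound discussion you flag as the delicate point), but the underlying argument is the same.
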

\begin{proof}
Even though the post-thresholded  power series tensor can be proven to have $\cO \left( \frac{NH}{\rho} \right)$, in a manner similar to that of Lemma \ref{lemma:Pre_thresholding_memory}, the intermediate powers of the dense transition matrix still have to be computed. This requires $\cO(N^2)$ memory, and therefore, no improvement in memory utilization is obtained.
\end{proof}

Assume $H>1$; To obtain the computational complexity of the DCNN method, we observe that two $N \times N$ matrices are multiplied $H$ times, and a $N \times N$ matrix needs to be multiplied with another $N \times F$ matrix, $H$ times. Two $N \times N$ matrices can be multiplied in $\cO(N^{2.38} )$ complexity using efficient matrix multiplication algorithms for square matrices~\cite{coppersmith}. The product between the transition matrix and the design matrix can be performed in $\cO(N ^2 F)$ operations, so the overall complexity is $\cO(N^{2.38} H)$. The computational 
complexity of the post-thresholded sDCNN is also $\cO(N^{2.38} H)$, because the dense power series tensor needs to be computed. 

The pre-thresholded DCNN achieves an improvement in the computational complexity because the power series tensor is computed by multiplying two sparse $N \times N$ matrices. 
The costliest operation is computing $\bar{P} ^H$ which is obtained by multiplying $\bar{P} ^{H-1}$, a sparse matrix
with at most $\frac{N}{\sigma ^H}$ non-zero entries, and $\bar{P}$, another sparse matrix with at most $\frac{N}{\sigma}$ non-zero entries. Using efficient sparse matrix multiplication methods~\cite{LeGall,yuster_zwick}, if 
the condition $\frac{1}{\sigma ^H} < N ^{0.14}$ holds, then the computational complexity of the sparse method is $\cO(N^{2+o(1)})$.

Therefore, pre-thresholded sDCNN achieves $\cO(N)$ memory complexity and $\cO(N^2)$ computational complexity, a significant improvement over DCNN, which requires $\cO(N^2)$ memory and $\cO(N^{2.38})$ computational complexity. 
However, post-thresholded sDCNN still requires the same memory and computational complexity as that of DCNN. Therefore, going forward we will simply be considering pre-thresholded sDCNN, although post-thresholding could be thought of a way of regularizing the DCNN method.

\section{Experiments}
In this section we explore how thresholding affects both the density of transient diffusion kernel and the performance of DCNNs.

\subsection{Effect of Thresholding on Density}
\begin{figure}[h!]
    \centering
    \includegraphics[scale=0.6]{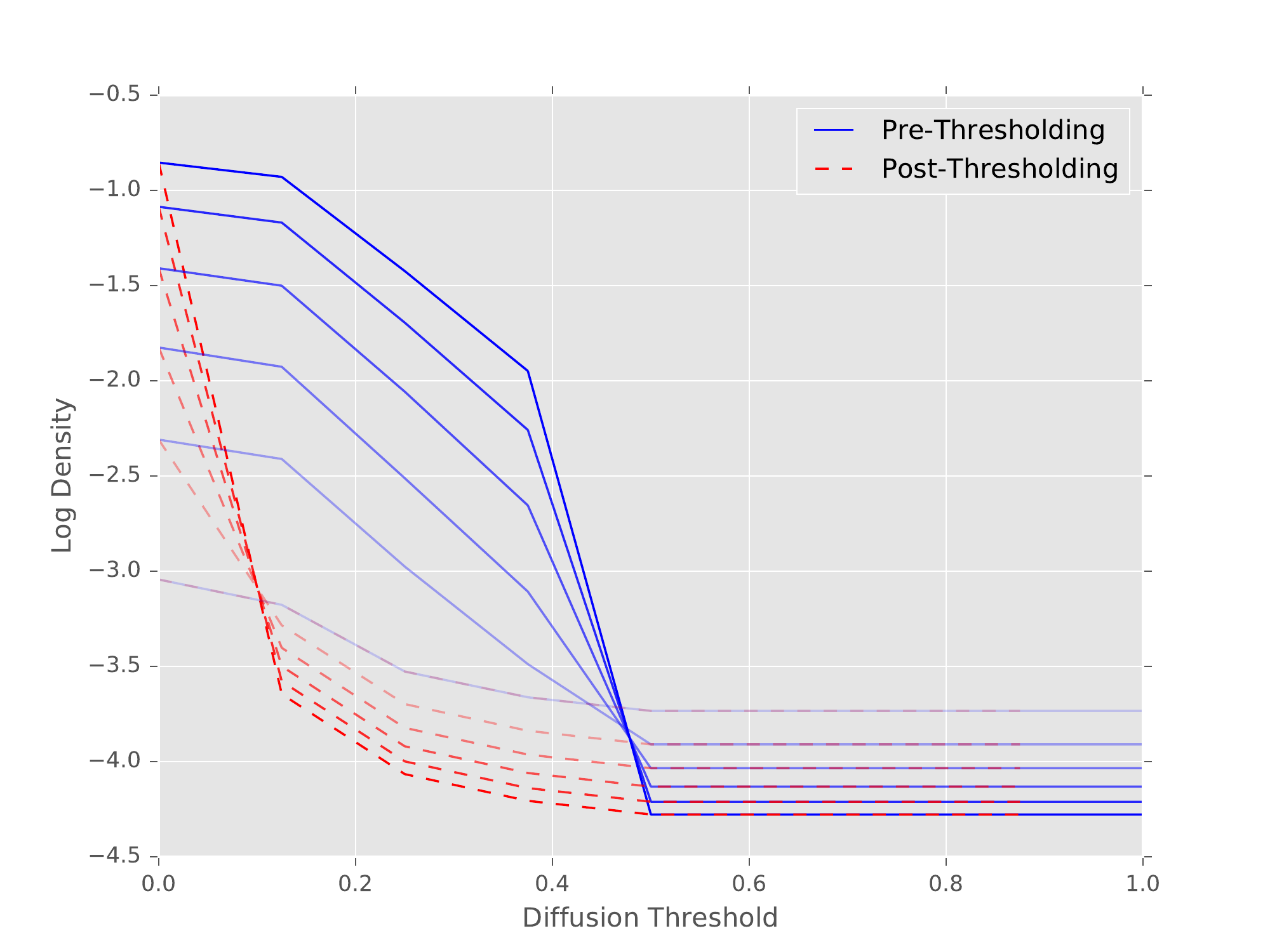}
    \caption{The effect of diffusion threshold on transient diffusion kernel density under pre- and post-thresholding strategies on the Cora dataset.  The density is given by the proportion of non-zero entries in the transient diffusion kernel $I, P, P^2, ..., P^H$ and is plotted on a log scale.    Lighter lines indicate smaller diffusion kernels.}
    \label{fig:diffusionDensity}
\end{figure}

Figure \ref{fig:diffusionDensity} shows the results of applying the two thresholding strategies to the Cora dataset. Observe that, both thresholding techniques show a decrease in diffusion kernel density as the threshold is increased. However the decrease is more gradual for the pre-thresholding method, due to the fact that, transition probabilities reach low values for greater number of hops, which when post-thresholded lead to low densities for relatively slight increase in the diffusion threshold. On the other hand, the pre-thresholding method is better behaved, with the kernel density decreasing in a more gradual fashion. 

The darker lines corresponding to larger diffusion kernels obtained through greater number of hops, $H$, have higher diffusion kernel density for low diffusion threshold. However, as the diffusion threshold is increased, the darker lines cross over the lighter lines around $0.5$, for the pre-thresholding method. The justification for this phenomenon is that as the diffusion threshold is increased to $0.5$, only the contribution of the identity matrix remains, and the larger diffusion kernels therefore show lower density. A similar phenomenon occurs for the pre-thresholding technique, except that the crossover region occurs much earlier. Although we show only the behavior of the Cora dataset, the behavior should hold for other datasets as well.

\subsection{Effect of Thresholding on Performance}
\begin{figure}[h!]
    \centering
    \includegraphics[scale=0.45]{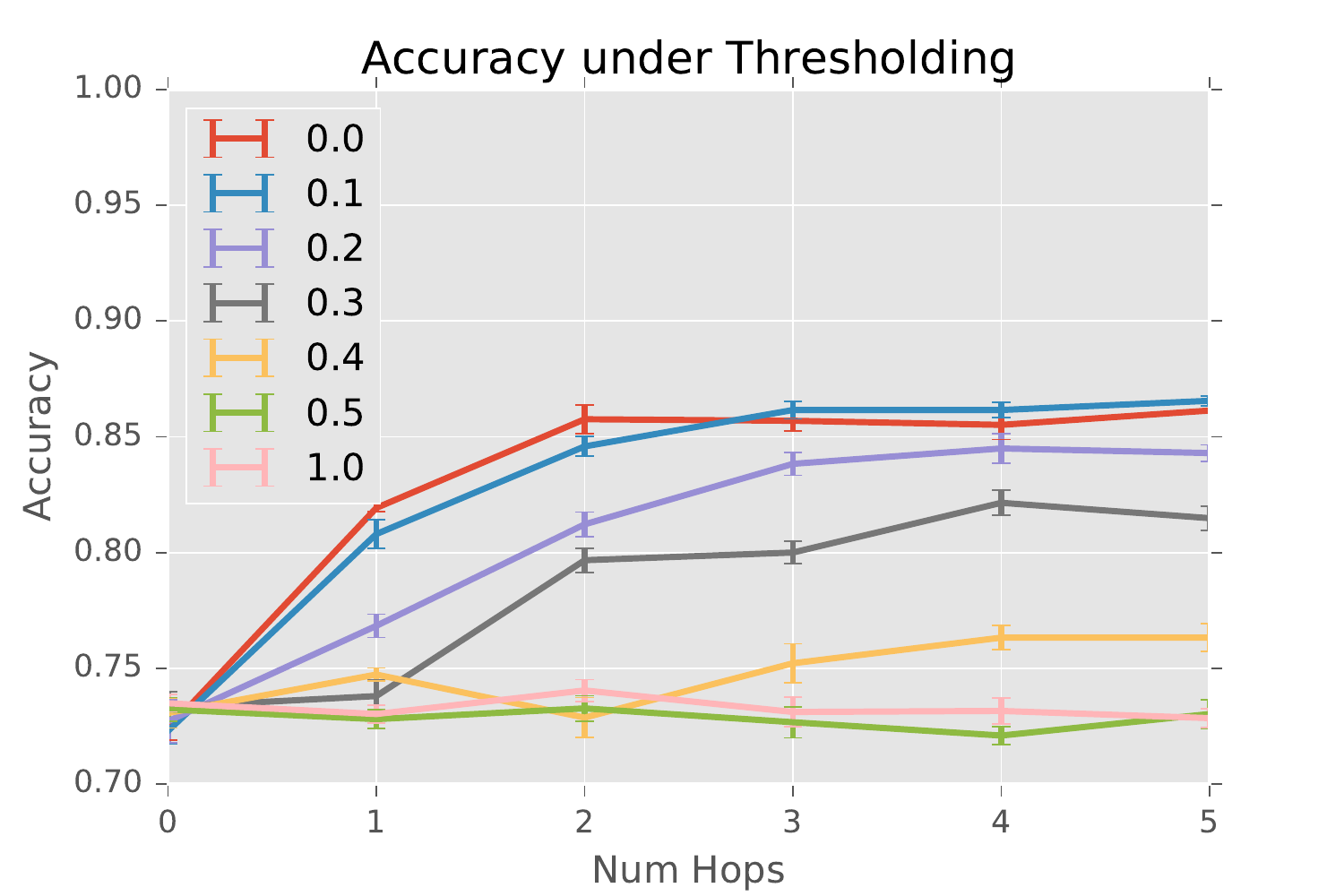}
    \includegraphics[scale=0.45]{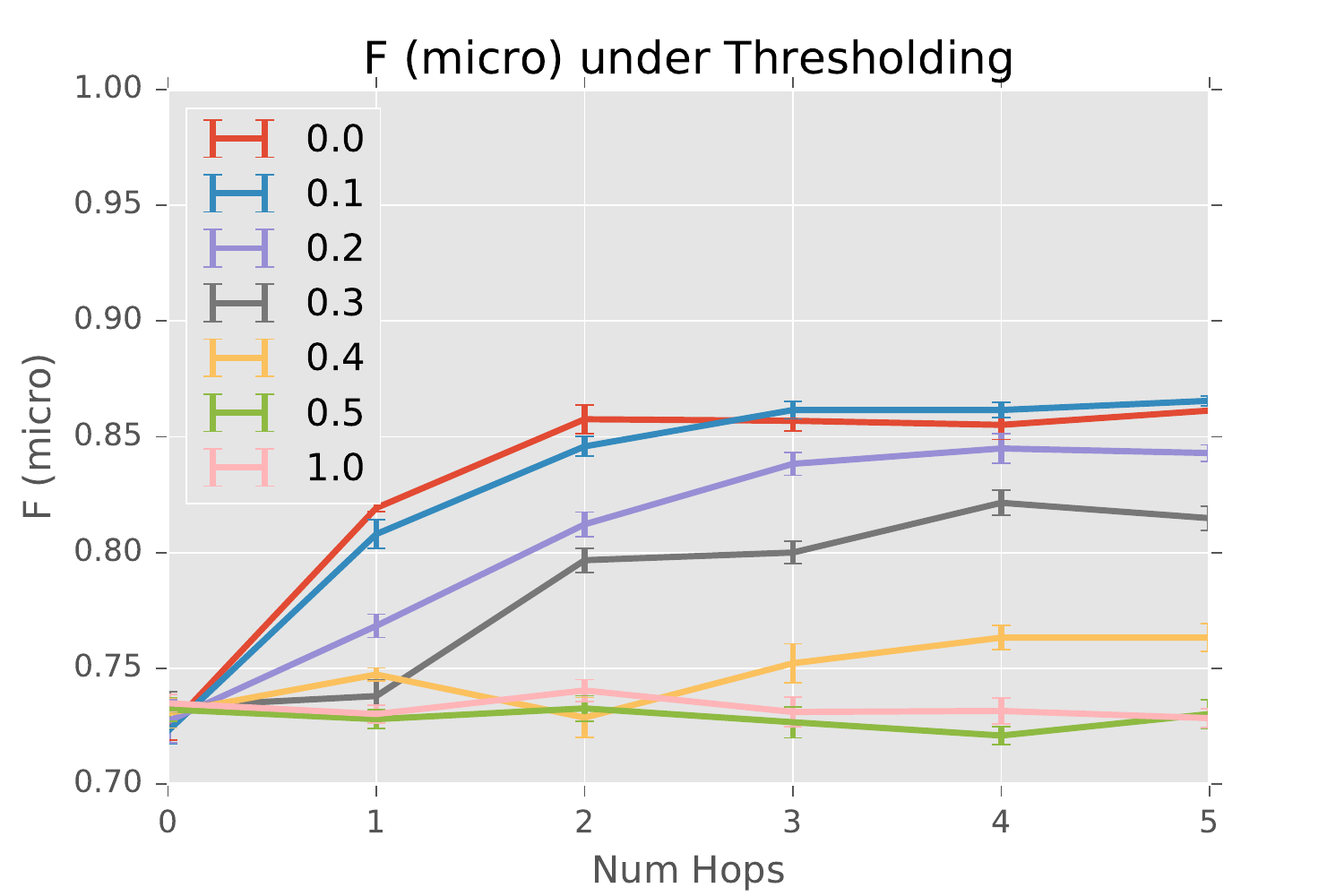}
    \caption{The effect of pre-thresholding on classification performance.  Small thresholds ($\rho \leq 0.1$) have no significant effect on classification performance, while larger thresholds ($\rho \geq 0.5$) remove the benefit of including neighborhood information.}
    \label{fig:thresholdingPerformance}
\end{figure}
Figure \ref{fig:thresholdingPerformance} shows the effect of thresholding on DCNN performance.  Observe that, for both thresholding techniques, small-to-moderate thresholding values ($\rho \leq 0.1$) have no significant effect on performance, although performance degrades for larger thresholds.  This suggests that applying a small threshold when computing the transient diffusion kernel is an effective means of scaling DCNNs to larger graphs.

However, it should be noted that for moderate thresholds $0.1 < \rho < 0.5$, the performance begins to decay.  Eventually, when the threshold reaches the reciprocal of the maximum degree $\rho \geq 0.5$, the benefit of including neighborhood information vanishes entirely because no edges are left in the graph.

\section{Related Work}

Neural networks for graphs were introduced by Gori et al.~\cite{Gori_2005} and followed by Scarselli et al.~\cite{Scarselli_2009}, in departure from the traditional approach of transforming the graph into a simpler representation which could then be tackled by conventional machine learning algorithms. Both the works used recursive neural networks for processing graph data, requiring repeated application of contraction maps until node representations reach a stable state. Bruna et al.~\cite{Bruna_2013} proposed two generalizations of CNNs to signals defined on general domains; one based upon a hierarchical clustering of the domain, and another based on the spectrum of the graph Laplacian. This was followed by Henaff et al.~\cite{Henaff_2015}, which used these techniques to address a setting where the graph structure is not known a priori, and needs to be inferred. However, the parametrization of CNNs developed in \cite{Bruna_2013,Henaff_2015} are dependent on the input graph size, while that of DCNNs or sDCNNs are not, making the technique transferable, i.e., a DCNN or sDCNN learned on one graph can be applied to another. Niepert et al.~\cite{Niepert_2016} proposed a CNN approach which extracts locally connected regions of the input graph, requiring the definition of node ordering as a pre-processing step.

Atwood and Towsley~\cite{NIPS_atwood} had remarked that the DCNN technique uses $\cO(N^2)$ memory. It is worth noting that the sparse implementation of DCNN yields an order improvement in the memory complexity.  This will offer a performance parity with more recent techniques that report $\cO(E)$ memory complexity, like the graph convolutional network method by Kipf et al.~\cite{ICLR_kipf} and the localized spectral filtering method of Defferard et al.~\cite{NIPS_defferrard} when the (unthresholded) input graph is sparse ($\cO(E)$ = $\cO(N)$), which is the case for many real-world datasets of interest, and a performance improvement when graphs are dense  ($\cO(E)$ > $\cO(N)$).

\section{Conclusion}
We have shown that, by applying a simple thresholding technique, we can reduce the computational complexity of diffusion-convolutional neural networks to $\cO(N^2)$ and the memory complexity to $\cO(N)$.  This is achieved without significantly affecting the predictive performance of the model.

\medskip

\small

\end{document}